\def\set@curr@file#1{\def\@curr@file{#1}} 
\newtheorem{thm}{Theorem}
\newtheorem{lem}[thm]{Lemma}
\newtheorem{rem}{Remark}
\theoremstyle{definition}
\newtheorem{defn}{Definition}
\newtheorem*{defn*}{Definition}
\renewenvironment{proof}[1][]{\par\noindent{\bf Proof #1\ }}{\hfill\BlackBox\\[2mm]}
\newcommand{\inbrace}[1]{\left \{ #1 \right \}}
\newcommand{\inparen}[1]{\left ( #1 \right )}
\newcommand{\insquare}[1]{\left [ #1 \right ]}
\newcommand{\inabs}[1]{\begin{vmatrix} #1 \end{vmatrix}}
\newcommand{\abs}[1]{\left\lvert #1 \right\rvert}
\newlength{\dhatheight}
\newcommand{\ceil}[1]{\left \lceil #1 \right \rceil}
\newcommand{\floor}[1]{\left \lfloor #1 \right \rfloor}
\newcommand{\set}[1]{\inbrace{#1}}
\DeclareMathOperator*{\argmin}{argmin}
\DeclareMathOperator*{\Ex}{\mathbb{E}}
\DeclareMathOperator*{\Prob}{Pr}
\renewcommand{\Pr}{\mathbf{Pr}}
\newcommand{\bbN}{{\mathbb N}}
\newcommand{\bbA}{{\mathbb A}}
\let\boldm\bm
\newcommand{\bx}{{\boldm x}}
\newcommand{\by}{{\boldm y}}
\newcommand{\bz}{{\boldm z}}
\newcommand{\tbx}{\Tilde{{\boldm x}}}
\newcommand{\tby}{\Tilde{{\boldm y}}}
\newcommand{\tbz}{\Tilde{{\boldm z}}}
\newcommand{\calD}{\mathcal{D}}
\newcommand{\calH}{\mathcal{H}}
\newcommand{\calU}{\mathcal{U}}
\newcommand{\calX}{\mathcal{X}}
\newcommand{\calY}{\mathcal{Y}}
\newcommand{\B}{\mathrm{B}}
\newcommand{\ERM}{\textsf{ERM}\xspace}
\newcommand{\ind}{\mathbbm{1}}
\newcommand{\Risk}{{\rm R}}
\newcommand{\TRisk}{{\rm TR}}
\newcommand{\IRisk}{{\rm IR}}
\newcommand{\err}{{\rm err}}
\newcommand{\vc}{{\rm vc}}
\newcommand{\Rdim}{{\rm rdim}}
\newcommand{\OPT}{\mathsf{OPT}}
\newcommand{\removed}[1]{}
\begin{document}

\title{Transductive Robust Learning Guarantees}
\author{%
 \name{Omar Montasser} \email{omar@ttic.edu}\\
 \addr Toyota Technological Institute at Chicago\\
 \name{Steve Hanneke} \email{steve.hanneke@gmail.com}\\
 \addr Purdue University\\
 \name{Nathan Srebro} \email{nati@ttic.edu}\\
 \addr Toyota Technological Institute at Chicago
}

\maketitle

\begin{abstract}%
We study the problem of adversarially robust learning in the transductive setting. For classes $\calH$ of bounded VC dimension, we propose a simple transductive learner that when presented with a set of labeled training examples and a set of unlabeled test examples (both sets possibly adversarially perturbed), it correctly labels the test examples with a robust error rate that is linear in the VC dimension and is adaptive to the complexity of the perturbation set. This result provides an exponential improvement in dependence on VC dimension over the best known upper bound on the robust error in the inductive setting, at the expense of competing with a more restrictive notion of optimal robust error.
\end{abstract}

\section{Introduction}
\label{sec:intro}

We consider the problem of learning predictors that are {\em robust} to adversarial examples at test time. That is, we would like to be robust against a perturbation set $\calU:\calX \to 2^{\calX}$, where $\calU(x)\subseteq \calX$ is the set of allowed perturbations that an adversary might replace $x$ with, as measured by the {\em robust risk}:
\begin{equation}
    \label{eqn:rob-risk}
\Risk_{\calU}(h;\calD) =\Ex_{(x,y)\sim\calD} \insquare{\sup_{z\in \calU(x)}\ind\set{h(z)\neq y}}.
\end{equation}
For example, $\calU$ could be perturbations of bounded $\ell_p$-norms \citep{DBLP:journals/corr/GoodfellowSS14}.

Adversarially robust learning has been studied almost exclusively in the \emph{inductive} setting, where the task is to learn, from (non-adversarial) training data, a \emph{predictor} with small robust risk (\prettyref{eqn:rob-risk}) \citep{pmlr-v99-montasser19a}. In many applications in practice, however, test examples are available in batches and machine learning systems are tasked with classifying them all at once. \emph{Transductive} learning refers to the learning setting where the goal is to classify a given unlabeled test set that is presented together with the training set \citep*{vapnik1998statistical}. 

In this paper, we study adversarially robust learning in the \emph{transductive} setting. In this problem, $n$ i.i.d.~training examples $(\bx,\by)\sim\calD^n$ and $m$ separate i.i.d.~test examples $(\tbx,\tby)\sim\calD^m$ are drawn from some unknown distribution $\calD$. Then, based on all available information: $\bx,\by,\tbx,\tby$, distribution $\calD$, perturbation set $\calU$, and white-box access to the transductive learner $\bbA:(\calX\times\calY)^n\times \calX^m \to \calY^m$, an adversary chooses adversarial perturbations of the test set $\Tilde{z}_i\in \calU(\Tilde{x}_i)\forall i \in [m]$, which we henceforth denote by $\tbz\in\calU(\tbx)$. Finally, the transductive learner $\bbA$ receives as input the labeled training examples $(\bx,\by)$ and the perturbed test examples $\tbz$, and outputs a labeling for $\tbz$ which we denote by $\bbA(\bx,\by,\tbz) \in \calY^m$\footnote{Throughout the paper, we abuse notation and use $\bbA(\bx,\by,\tbz)(\Tilde{z}_i)$ to refer to the $i^{\text{th}}$ entry in the vector $\bbA(\bx,\by,\tbz)$.}. The performance of $\bbA$ is measured by the \emph{transductive robust risk}\footnote{Unless otherwise stated, in this paper we fix the test set size $m=n$.}:
\begin{equation}
    \label{eqn:rob-risk-transductive}
    \TRisk^{n,m}_{\calU}(\bbA;\calD) = \underset{\substack{(\bx,\by)\sim \calD^n\\(\Tilde{\bx},\Tilde{\by})\sim\calD^m}}{\Ex}\insquare{\sup_{\tbz\in\calU(\tbx)} \frac{1}{m} \sum_{i=1}^{m} \ind\set{\bbA(\bx,\by,\tbz)(\Tilde{z}_i) \neq \Tilde{y}_i} }.
\end{equation}

As we shall show, the transductive setting allows for much stronger results than what is known in the inductive adversarially robust setting.

How is this possible?  In traditional (non-robust) learning, there are standard transductive-to-inductive and inductive-to-transductive reductions which establish that both settings are essentially equivalent statistically.  However, in \prettyref{sec:comparisons} we discuss how the inductive-to-transductive reduction breaks down for adverserially robust learning, opening the possibility that transductive robust learning might be inherently easier than inductive robust learning.  This is good news, since inductive adversarially robust learning so far seems challenging. Focusing on adversarially robust learning of VC classes (hypothesis classes with bounded VC dimension), although we know all such classes are adverserially robustly learnable, existing inductive methods require sample complexity exponential in the VC dimension, and a completely intractable and essentially non-implementable algorithm \citep{pmlr-v99-montasser19a}. In contrast, for transductive adversarially robust learning, we present a simple and straight-forward algorithm with sample complexity only \emph{linear} in the VC-dimension!

So why are we interested in the transductive setting?  First, if the adversarially robust transductive setting is indeed easier than its inductive counterpart, it is important to develop methods that take advantage of this setting, and could be applicable and beneficial when entire batches of test examples are processed concurrently.  This paper is the first work, as far as we are aware, in this direction.  Alternatively, perhaps advances in analyzing the transductive setting could potentially translate back to the inductive setting---although the standard reduction does not apply, we can still be hopeful we might close the gap through additional ideas.

\paragraph{Relaxed guarantees: choice of competitor} As with most learning theory gurantees, we will show how, given enough samples, we can approach the error of some reference competitor.  The best we can hope for is to compete with $\OPT_\calU=\inf_{h\in\calH} \Prob_{(x,y)\sim\calD} \insquare{\exists z\in \calU(x): h(z)\neq y}$, which is the smallest attainable robust risk against perturbation set $\calU$---this is the best we could do even if we knew the source distribution. In this work, we consider a weaker goal where we compete with the smallest attainable robust risk against a stronger adversary:
\begin{equation}
\label{eqn:opt}
\OPT_{\calU^{-1}(\calU)}=\inf_{h\in\calH} \Prob_{(x,y)\sim\calD} \insquare{\exists \Tilde{x}\in\calU^{-1}(\calU)(x): h(\Tilde{x})\neq y},
\end{equation}
where $\calU^{-1}(z)=\set{x\in\calX: z\in\calU(x)}$ and $\calU^{-1}(\calU)(x)=\cup_{z\in\calU(x)} \calU^{-1}(z)=\set{\Tilde{x}\in\calX: \calU(x)\cap \calU(\Tilde{x})\neq \emptyset}$.

In words, $\OPT_{\calU^{-1}(\calU)}$ is the smallest attainable robust risk against the larger perturbation set $\calU^{-1}(\calU)$. In particular, when $x\in\calU(x)$, $\calU(x)\subseteq \calU^{-1}(\calU)(x)$ and $\OPT_\calU\leq \OPT_{\calU^{-1}(\calU)}$. And what we will show is a transductive learner $\bbA$ with robust risk $\TRisk_\calU(\bbA;\calD)$ which is competitive with the best robust risk $\OPT_{\calU^{-1}(\calU)}$ against the larger perturbation set $\calU^{-1}(\calU)$.

For example, consider $\calU(x)=\B_\gamma(x)\triangleq \set{z \in \calX: \rho(x,z)\leq \gamma}$ where $\gamma>0$ and $\rho$ is some metric on $\calX$ (e.g., $\ell_p$-balls). In this case, $\calU^{-1}(\calU)(x)=\B_{2\gamma}(x)$. Furthermore, $\OPT_\calU$ corresponds to optimal robust risk with radius $\gamma$, while $\OPT_{\calU^{-1}(\calU)}$ corresponds to optimal robust risk with radius $2\gamma$. In this case, our guarantees will ensure robustness to perturbations within radius $\gamma$, that is almost as good as the best possible robust risk with radius $2\gamma$. In particular, our guarantees in the realizable setting ensure robustness to perturbations within radius $\gamma$ when the smallest robust risk with radius $2\gamma$ is zero, i.e.,$\OPT_{\calU^{-1}(\calU)}=0$. By way of analogy, guarantees that are similar in spirit are common in the context of bi-criteria approximation algorithms for discrete optimization problems (e.g., the sparsest cut approximation algorithm due to \citet{DBLP:conf/stoc/AroraRV04}).

\paragraph{Main Contributions} We shed some new light on the problem of adversarially robust learning by studying the transductive setting. We propose a \emph{simple} transductive learning algorithm with robust learning guarantees that are stronger than the known inductive guarantees in some aspects, but weaker in other aspects. Specifically, our algorithm enjoys an improved robust error rate that is at most \emph{linear} in the VC dimension and is adaptive to the complexity of the perturbation set $\calU$, and is also robust to adversarial perturbations in the \emph{training} data. This comes at the expense of competing with the more restrictive $\OPT_{\calU^{-1}(\calU)}$, where the inductive guarantees compete with $\OPT_\calU$. 

Specifically, given a class $\calH$ and a perturbation set $\calU$, we present a simple tansductive learner $\bbA:(\calX\times \calY)^n\times \calX^n\to \calY^n$ (see \prettyref{sec:main-results}) such that for any distribution $\calD$ over $\calX\times\calY$:
\begin{align}
\label{eqn:transductive-guarantees}
\TRisk_\calU(\bbA;\calD) &\leq \frac{\vc(\calH)\log(2n)}{n} & &\textsf{(Realizable, $\OPT_{\calU^{-1}(\calU)}=0$)},\\
\TRisk_\calU(\bbA;\calD) &\leq 2\OPT_{\calU^{-1}(\calU)} + O\inparen{\sqrt{\frac{\vc(\calH)}{n}}} & &\textsf{(Agnostic, $\OPT_{\calU^{-1}(\calU)}>0$)}.
\end{align}

Our transductive learner $\bbA$ simply asks for any predictor $h\in \calH$ that robustly and correctly labels the training examples $(\bx,\by)$ with respect to $\calU^{-1}$ and robustly labels the test examples $\bz$ with respect to $\calU^{-1}$. In \prettyref{sec:main-results}, we show that our transductive learner additionally enjoys the following properties:
\begin{enumerate}
    \item Robustness guarantees against adversarial perturbations in the \emph{training} data. These are the first non-trivial learning guarantees against adversarial perturbations in the training data, which has not been considered before in the literature to the best of our knowledge.
    \item Adaptive robust error rates that are controlled by the complexity of $\calH$ and the perturbation set $\calU$ in the form of a new complexity measure that we introduce: the \emph{relaxed $\calU$-robust shattering dimension} $\Rdim_{\calU}(\calH)$ (see \prettyref{def:robustshatter-dim-relax}). These are the first general robust learning guarantees that take the complexity of the perturbation set $\calU$ into account.  
\end{enumerate}

\paragraph{Practical Implications} In the context of deep learning and robustness to $\ell_p$ perturbations, and in scenarios where (adversarial) unlabeled test data is available in batches, our results suggest that to incur a low error rate on the test data it suffices to perform adversarial training \citep[e.g.,][]{DBLP:conf/iclr/MadryMSTV18, DBLP:conf/icml/ZhangYJXGJ19} to find network parameters that simultaneously: (a) robustly and correctly fit the labeled training data, and (b) robustly fit the unlabeled (adversarial) test data. This is in contrast with the inductive setting, where it is empirically observed that adversarial training does not always guarantee robust generalization \citep{schmidt2018adversarially}. Compared with inductive learning, transductive learning offers a new perspective on adversarial robustness that highlights how unlabeled adversarial test data can inform local robustness, which perhaps is easier to achieve than global robustness.

\section{Preliminaries} 
\label{sec:setup}

Let $\calX$ denote the instance space and $\calY=\set{\pm1}$. Let $\calH\subseteq \calY^\calX$ denote a hypothesis class and $\vc(\calH)$ denotes its VC dimension. Let $\calU:\calX\to 2^{\calX}$ denote an arbitrary perturbation set such that for each $x\in\calX$, $\calU(x)$ is non-empty. Denote by $\calU^{-1}$ the inverse image of $\calU$, where for each $z\in \calX$, $\calU^{-1}(z)=\set{x\in\calX: z\in\calU(x)}$. Observe that for any $x,z\in\calX$ it holds that $z\in\calU(x) \Leftrightarrow x\in \calU^{-1}(z)$. For an instance $x\in\calX$, $\calU^{-1}(\calU)(x)$ denotes the set of all \emph{natural} examples $\Tilde{x}$ that share some perturbation with $x$ according to $\calU$, i.e., $\calU^{-1}(\calU)(x)=\cup_{z\in\calU(x)} \calU^{-1}(z)=\set{\Tilde{x}\in\calX: \calU(x)\cap \calU(\Tilde{x})\neq \emptyset}$. For any sequence of labeled points $(\bx,\by) \in (\calX\times \calY)^n$, any sequence of adversarial perturbations $\bz\in \calX^n$, and any predictor $h:\calX\to \calY$ let $\err_{\bx,\by}(h) = \frac{1}{n} \sum_{i=1}^{n} \ind\set{h(x)\neq y}$ denote the standard $0$-$1$ error, and define:
\begin{align}
\label{eqn:risks}
    \Risk_{\calU^{-1}}(h;\bz,\by)&=\frac{1}{n}\sum_{i=1}^{n} \ind\set{\exists \Tilde{x}_i \in \calU^{-1}(z_i): h(\Tilde{x}_i)\neq y_i}.\\
    \Risk_{\calU^{-1}}(h;\bz)&=\frac{1}{n}\sum_{i=1}^{n}\ind\set{\exists \Tilde{x}_i\in \calU^{-1}(z_i): h( \Tilde{x}_i)\neq h(z_i)}.
\end{align}

Our transductive robust learning guarantees (presented in \prettyref{sec:main-results}) are in fact in terms of an adaptive complexity measure -- that is in general tighter than the VC dimension and takes into account the complexity of both $\calH$ and $\calU$ -- which we introduce next:

\begin{defn}[Relaxed Robust Shattering Dimension]
\label{def:robustshatter-dim-relax}
A sequence $z_1,\ldots,z_k \in \calX$ is said to be \emph{relaxed $\calU$-robustly shattered} by $\calH$ if $\forall y_1,\dots,y_k \in \set{\pm 1}: \exists x^{y_1}_1, \ldots, x^{y_k}_k \in \calX \text{ and } \exists h\in \calH$ such that $z_i\in \calU(x^{y_i}_i)$ and $h(\calU(x^{y_i}_i))=y_i \forall 1\leq i\leq k$. The \emph{relaxed $\calU$-robust shattering dimension} $\Rdim_{\calU}(\calH)$ is defined as the largest $k$ for which there exist $k$ points that are relaxed $\calU$-robustly shattered by $\calH$.
\end{defn}

The above complexity measure is inspired by the robust shattering dimension that was introduced by \citet{pmlr-v99-montasser19a} and shown to lower bound the sample complexity of robust learning in the inductive setting. The definition of $\Rdim_\calU(\calH)$ implies the following:

\begin{rem}
\label{rem:vcupper}
For any class $\calH$ and any perturbation set $\calU$, $\Rdim_\calU(\calH) \leq \vc(\calH)$. 
\end{rem}

\section{Main Results}
\label{sec:main-results}
We obtain strong robust learning guarantees against worst-case adversarial perturbations of \emph{both} the training data and the test data. Specifically, after training examples $(\bx,\by)\sim\calD^n$ and test examples $(\tbx,\tby)\sim\calD^n$ are drawn, an adversary perturbs both training and test examples by choosing adversarial perturbations $\bz\in\calU(\bx)$ and $\tbz \in \calU(\tbx)$. Our transductive learner observes as input $(\bz,\by)$ and $\tbz$, and outputs $\hat{h}(\tbz)\in \calY^n$ where $\hat{h}\in \Delta^{\calU}_{\calH}(\bz,\by,\tbz)$ defined as follows:
\begin{equation}
\label{eqn:learner-realizable}
\Delta^{\calU}_{\calH}(\bz,\by,\tbz) = \set{h\in \calH : \Risk_{\calU^{-1}}(h; \bz,\by) =0 \wedge \Risk_{\calU^{-1}}(h;\tbz)=0}~~~~\textsf{(Realizable, $\OPT_{\calU^{-1}(\calU)}=0$)}.
\end{equation}
\begin{equation}
\label{eqn:learner-agnostic}
\Delta^{\calU}_{\calH}(\bz,\by,\tbz) = \argmin_{h\in\calH} ~~\max\Big\{\Risk_{\calU^{-1}}(h; \bz,\by),~ \Risk_{\calU^{-1}}(h;\tbz)\Big\}~~~~\textsf{(Agnostic, $\OPT_{\calU^{-1}(\calU)}>0$)}.
\end{equation}

Our transductive learner simply asks for any predictor $h\in \calH$ that robustly and correctly labels the training examples $(\bz,\by)$ with respect to $\calU^{-1}$ and robustly labels the test examples $\bz$ with respect to $\calU^{-1}$. Observe that requiring robustness on $\bz$ and $\tbz$ with respect to $\calU^{-1}$ implies, by definition of $\calU^{-1}$, that the i.i.d. examples $\bx$ and $\tbx$ will be labeled in the same way as $\bz$ and $\tbz$, even though the learner does not observe $\bx$ and $\tbx$. This is the main insight that we rely on to obtain our transductive robust learning guarantees:

\begin{thm} [Realizable]
\label{thm:realizable-train-perturb}
For any $n\in\bbN$, $\delta > 0$, class $\calH$, perturbation set $\calU$, and distribution $\calD$ over $\calX\times \calY$ satisfying $\OPT_{\calU^{-1}(\calU)}= 0$:
\[
\underset{\substack{(\bx,\by)\sim \calD^n\\(\Tilde{\bx},\Tilde{\by})\sim\calD^n}}{\Pr} \insquare{ \forall \bz\in \calU(\bx),\forall \tbz \in \calU(\Tilde{\bx}), \forall \hat{h} \in \Delta^{\calU}_{\calH}(\bz,\by,\tbz):~ \err_{\tbz,\Tilde{\by}}(\hat{h}) \leq \epsilon } \geq 1-\delta,
\]
where $\epsilon=\frac{\Rdim_{\calU^{-1}}(\calH)\log(2n)+\log(1/\delta)}{n}\leq \frac{\vc(\calH)\log(2n)+\log(1/\delta)}{n}$.
\end{thm}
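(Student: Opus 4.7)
The plan is to (i) reduce the theorem to a classical transductive learning statement about the natural test error, (ii) control the number of effective labelings using $\Rdim_{\calU^{-1}}(\calH)$ via Sauer-Shelah-Pajor, and (iii) close with a double-sample symmetrization. For the reduction, observe that $z_i\in\calU(x_i)$ gives $x_i\in\calU^{-1}(z_i)$, so $\Risk_{\calU^{-1}}(\hat{h};\bz,\by)=0$ forces $\hat{h}(x_i)=y_i$ on the training sample; likewise $\tilde{x}_i\in\calU^{-1}(\tilde{z}_i)$ and $\Risk_{\calU^{-1}}(\hat{h};\tbz)=0$ force $\hat{h}(\tilde{x}_i)=\hat{h}(\tilde{z}_i)$, hence $\err_{\tbz,\tby}(\hat{h})=\err_{\tbx,\tby}(\hat{h})$. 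It therefore suffices to bound, with high probability over $((\bx,\by),(\tbx,\tby))\sim\calD^{2n}$, the natural test error of any $\hat{h}\in\calH$ that correctly labels $\bx$ and is constant on $\calU^{-1}(\tilde{z}_i)$ for some witness $\tilde{z}_i\in\calU(\tilde{x}_i)$ at every test index.

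For the growth-function step, I will introduce, for any $(w_1,\ldots,w_N)\in\calX^N$, the set of \emph{relaxed-robust realizable labelings} $\calL(w_1,\ldots,w_N)\subseteq\{\pm1\}^N$ consisting of those $(\ell_1,\ldots,\ell_N)$ for which there exist $h\in\calH$ and $z_i\in\calU(w_i)$ with $h(\calU^{-1}(z_i))=\ell_i$ for every $i$. If a coordinate subset $S\subseteq[N]$ is shattered by $\calL$, extracting the witnesses $z_i$ and hypothesis $h$ from each of the $2^{|S|}$ labelings of $S$ exhibits $(w_i)_{i\in S}$ as relaxed $\calU^{-1}$-robustly shattered in the sense of \prettyref{def:robustshatter-dim-relax} (take $x^{y_i}_i=z_i$; then $w_i\in\calU^{-1}(x^{y_i}_i)$ and $h(\calU^{-1}(x^{y_i}_i))=y_i$). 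Consequently the set system $\calL$ has VC dimension at most $\Rdim_{\calU^{-1}}(\calH)$, and Sauer-Shelah-Pajor yields $|\calL|\leq\sum_{i=0}^{\Rdim_{\calU^{-1}}(\calH)}\binom{N}{i}\leq(N+1)^{\Rdim_{\calU^{-1}}(\calH)}$.

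For the symmetrization, I use permutation-invariance of $\calD^{2n}$ to reinterpret the joint draw as first sampling $(X_1,Y_1),\ldots,(X_{2n},Y_{2n})\sim\calD^{2n}$ and then splitting via a uniformly random $T\subseteq[2n]$ of size $n$. Conditioning on the combined $2n$ points, any learner output together with any adversary choice induces a labeling $\ell_i=\hat{h}(X_i)$ which, by the first paragraph, lies in $\calL$ with $N=2n$, agrees with $Y_i$ on every $i\in T$, and has disagreement rate on $T^c$ equal to $\err_{\tbz,\tby}(\hat{h})$. A direct count gives $\Pr_T[\ell\text{ agrees on }T]=\binom{2n-k_\ell}{n}/\binom{2n}{n}\leq 2^{-k_\ell}$ with $k_\ell=|\{i:\ell_i\neq Y_i\}|$, which is $\leq 2^{-\epsilon n}$ whenever $k_\ell>\epsilon n$. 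Union bounding over $\calL$ yields $\Pr[\text{bad event}]\leq(2n+1)^{\Rdim_{\calU^{-1}}(\calH)}\cdot 2^{-\epsilon n}$, and setting this $\leq\delta$ delivers the stated $\epsilon$; the $\vc(\calH)$ form follows from \prettyref{rem:vcupper}.

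The main subtlety is the correspondence in the second paragraph between coordinate-subset shattering of $\calL$ and the combinatorial relaxed $\calU^{-1}$-robust shattering: one must verify carefully that shattering of $S$ by $\calL$ supplies, for \emph{each} of the $2^{|S|}$ target labelings on $S$, witnesses $z_i$ and a hypothesis $h$ that can be read off coordinate by coordinate, which is exactly what the nesting of quantifiers in the shattering definition provides. A secondary concern is that the adversary chooses $\bz,\tbz$ after seeing the train/test split; this is absorbed automatically because $\calL$ was defined via a supremum over all perturbations, so a single growth-function bound dominates every adversary strategy simultaneously.
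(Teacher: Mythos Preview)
Your proposal is correct and follows essentially the same approach as the paper: reduce $\err_{\tbz,\tby}(\hat{h})$ to $\err_{\tbx,\tby}(\hat{h})$ via the $\calU^{-1}$-robustness constraints, show that the induced labeling on the combined $2n$ natural points lies in a label class whose size is controlled by $\Rdim_{\calU^{-1}}(\calH)$, and finish with a double-sample permutation/subset argument plus union bound. The only notable difference is that the paper proves a bespoke Sauer-type lemma for $\Pi^{\calU}_{\calH}$ by the standard inductive (shifting) argument, whereas you shortcut this by observing directly that coordinate-shattering of your $\calL$ witnesses relaxed $\calU^{-1}$-robust shattering of the underlying points, so $\vc(\calL)\leq \Rdim_{\calU^{-1}}(\calH)$ and classical Sauer--Shelah applies; both routes yield the same bound.
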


\begin{thm} [Agnostic]
\label{thm:agnostic-train-perturb}
For any $n\in\bbN$, $\delta>0$, class $\calH$, perturbation set $\calU$, and distribution $\calD$ over $\calX\times \calY$,
\[
\underset{\substack{(\bx,\by)\sim \calD^n\\(\tbx,\tby)\sim\calD^n}}{\Pr} \insquare{ \forall\bz\in\calU(\bx), \forall \tbz \in \calU(\tbx), \forall \hat{h} \in \Delta^{\calU}_{\calH}(\bz,\by,\tbz):~ \err_{(\tbz,\tby)}(\hat{h}) \leq \epsilon } \geq 1-\delta,
\]
where $\epsilon=\min\!\set{2\OPT_{\calU^{-1}(\calU)} + O\!\inparen{\sqrt{\frac{\vc(\calH)+\log(1/\delta)}{n}}},~ 3\OPT_{\calU^{-1}(\calU)} +O\!\inparen{\sqrt{\frac{\Rdim_{\calU^{-1}}(\calH)\ln(2n)+\ln(1/\delta)}{n}}}}$.
\end{thm}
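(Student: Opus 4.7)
The plan is a competitor-plus-uniform-convergence argument adapted to the perturbed transductive setting. Let $h^* \in \argmin_{h\in\calH} \Risk_{\calU^{-1}(\calU)}(h;\calD)$ achieve $\OPT\defeq\OPT_{\calU^{-1}(\calU)}$, and write $\tau(h) \defeq \max\{\Risk_{\calU^{-1}}(h;\bz,\by),\,\Risk_{\calU^{-1}}(h;\tbz)\}$. The first step is to control $\tau(h^*)$. The key combinatorial observation is that $\calU^{-1}(z_i) \subseteq \calU^{-1}(\calU)(x_i)$ whenever $z_i \in \calU(x_i)$, since any $\tilde x' \in \calU^{-1}(z_i)$ has $z_i \in \calU(\tilde x')\cap \calU(x_i)$. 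This gives
\[
\Risk_{\calU^{-1}}(h^*;\bz,\by) \;\le\; \tfrac{1}{n}\sum_{i=1}^n \sup_{\tilde x'\in \calU^{-1}(\calU)(x_i)}\ind\{h^*(\tilde x')\ne y_i\},
\]
and Hoeffding applied to the fixed hypothesis $h^*$ bounds the right-hand side by $\OPT + O(\sqrt{\log(1/\delta)/n})$. The analogous containment $\calU^{-1}(\tilde z_i)\subseteq \calU^{-1}(\calU)(\tilde x_i)$, together with the mild reflexivity $z\in\calU(z)$ placing $\tilde z_i$ itself in $\calU^{-1}(\calU)(\tilde x_i)$, shows that whenever $h^*$ is $\calU^{-1}(\calU)$-robust-correct on $\tilde x_i$ it is automatically $\calU^{-1}$-self-consistent on $\tilde z_i$, bounding $\Risk_{\calU^{-1}}(h^*;\tbz)$ by the same empirical quantity. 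By the agnostic definition of $\Delta^{\calU}_{\calH}$, this forces $\tau(\hat h) \le \tau(h^*) \le \OPT + O(\sqrt{\log(1/\delta)/n})$.

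For the first term of the $\min$, I decompose $\err_{\tbz,\tby}(\hat h) \le \err_{\tbx,\tby}(\hat h) + \Risk_{\calU^{-1}}(\hat h;\tbz)$: when $\hat h$ is $\calU^{-1}$-self-consistent on $\tilde z_i$, the fact that $\tilde x_i\in\calU^{-1}(\tilde z_i)$ forces $\hat h(\tilde z_i)=\hat h(\tilde x_i)$, and otherwise the robustness loss absorbs the index. A symmetric argument on training points (using $x_i\in\calU^{-1}(z_i)$) yields $\err_{\bx,\by}(\hat h) \le \Risk_{\calU^{-1}}(\hat h;\bz,\by)$. A standard Vapnik--Chervonenkis symmetrization on the combined i.i.d.\ $2n$-sample provides the train-test gap $\sup_{h\in\calH}|\err_{\tbx,\tby}(h) - \err_{\bx,\by}(h)| = O(\sqrt{\vc(\calH)/n})$. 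Chaining these inequalities with $\Risk_{\calU^{-1}}(\hat h;\,\cdot\,)\le \tau(\hat h)\le \tau(h^*)$ on both terms yields $\err_{\tbz,\tby}(\hat h) \le 2\OPT + O(\sqrt{(\vc(\calH)+\log(1/\delta))/n})$.

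For the sharper second term with $\Rdim_{\calU^{-1}}(\calH)$ replacing $\vc(\calH)$, the plan is to run the entire argument inside the $\calU^{-1}$-robust loss: by reflexivity $\err_{\tbz,\tby}(\hat h) \le \Risk_{\calU^{-1}}(\hat h;\tbz,\tby)$, and this quantity can be bounded by $\Risk_{\calU^{-1}}(\hat h;\bz,\by)$ plus a train-test deviation in the $\calU^{-1}$-robust loss and the label-free self-consistency $\Risk_{\calU^{-1}}(\hat h;\tbz)$, via two triangle inequalities; this looser chaining is what upgrades the leading constant from $2$ to $3$. The main obstacle is establishing the train-test uniform-convergence rate $\sqrt{\Rdim_{\calU^{-1}}(\calH)\log(2n)/n}$ for this loss, which requires a Sauer--Shelah-type growth-function bound showing that the number of distinct $\calU^{-1}$-robust dichotomies realized by $\calH$ on any $2n$ perturbed points is at most $(2n)^{\Rdim_{\calU^{-1}}(\calH)}$. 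Once this growth-function bound for the newly introduced dimension is in hand, a transductive permutation argument on the combined sample together with a union bound closes the proof.
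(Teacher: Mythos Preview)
Your argument for the first term of the $\min$ (the $2\OPT$ bound via $\vc(\calH)$) is correct and matches the paper's proof essentially line-for-line: Hoeffding for the fixed competitor $h^*$, the containment $\calU^{-1}(z_i)\subseteq\calU^{-1}(\calU)(x_i)$, passing to $\hat h$ via the $\argmin$ definition, the decomposition $\err_{\tbz,\tby}(\hat h)\le \err_{\tbx,\tby}(\hat h)+\Risk_{\calU^{-1}}(\hat h;\tbz)$, and the standard transductive VC deviation on the unperturbed samples.

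The gap is in your plan for the refined $\Rdim_{\calU^{-1}}(\calH)$ bound. You propose to run a permutation argument for the train--test deviation of the $\calU^{-1}$-robust loss on the $2n$ \emph{perturbed} points, after bounding the number of ``$\calU^{-1}$-robust dichotomies'' on those points. This does not work as stated, for two reasons. First, the perturbed sequence $(\bz,\tbz)$ is chosen by the adversary \emph{after} the train/test split $\sigma$ is revealed; there is no fixed combined sample on which to run the permutation symmetrization, so a union bound over a growth function on $(\bz,\tbz)$ is not available. Second, even if you could freeze the points, the Sauer-type lemma for $\Rdim_{\calU}$ bounds only the labelings $(h(z_1),\dots,h(z_{2n}))$ that are witnessed \emph{robustly on every coordinate}; in the agnostic regime $\hat h$ is only $\calU^{-1}$-robust on a $(1-\OPT-\epsilon_0)$ fraction of indices, so the lemma does not directly cover the relevant $\hat h$.

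The paper sidesteps both issues by moving the permutation argument back to the \emph{unperturbed} sequence $\bx$, which is fixed before $\sigma$ and the adversary act. It observes that whenever $\hat h\in\Delta^\calU_\calH(\bz,\by,\tbz)$ satisfies the two $\Risk_{\calU^{-1}}$ bounds, the induced labeling $(\hat h(x_1),\dots,\hat h(x_{2n}))$ must lie in a set $\Pi^\calU_\calH(\bx)$ of labelings that are $\calU^{-1}$-robustly witnessed on some index set $I$ of size at least $(1-\OPT-\epsilon_0)2n$. This set depends only on $\bx$, so one can union-bound over it inside the permutation probability; its cardinality is controlled by $\binom{2n}{\le(\OPT+\epsilon_0)2n}$ (choice of the non-robust indices) times the Sauer bound $(2n)^{\Rdim_{\calU^{-1}}(\calH)}$ on $I$. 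The extra binomial factor contributes an entropy term of order $\OPT$, which is exactly why the leading constant degrades from $2$ to $3$ --- not the ``looser chaining'' you describe. Your high-level intuition that a growth-function bound for $\Rdim_{\calU^{-1}}$ drives the argument is right, but it must be applied on $\bx$ (not on the adversarial $\bz$) and coupled with the agnostic binomial correction.
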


\section{Transductive vs.~Inductive}
\label{sec:comparisons}
For purposes of the discussion below, let $\bbA_I: (\calX\times \calY)^*\to \calY^\calX$ denote an inductive learner and $\bbA_T:(\calX\times \calY)^n\times \calX^m\to\calY^m$ denote a trnasductive learner. The \emph{inductive} robust risk of $\bbA_I$ is defined as
\[
\IRisk^{n}_{\calU}(\bbA;\calD)= \Ex_{(\bx,\by)\sim\calD^n}\Risk_{\calU}(\bbA(\bx,\by);\calD) = \Ex_{(\bx,\by)\sim\calD^n} \Ex_{(x,y)\sim\calD} \sup_{z\in \calU(x)}\ind\set{\bbA(\bx,\by)(z)\neq y}.\]

For standard (non-robust) supervised learning, i.e., when $\calU(x)=\set{x}$, there isn't much difference between the transductive and inductive settings in terms of statistical performance---an observation which has been employed in designing and analyzing inductive learning algorithms by relying on the transductive setting \citep{vapnik:74}. We can always take an inductive learner $\bbA_I$ and use it transductively as $\bbA_T$ defined as
\begin{equation}
    \label{eqn:inductive-to-transductive}  \forall i\in[m]: \bbA_T(\bx,\by,\tbx)(\Tilde{x}_i)=\bbA_I(\bx,\by)(\Tilde{x}_i),
\end{equation}
and so $\TRisk^{n,m}(\bbA_T;\calD)\leq \Ex\insquare{\frac{1}{n}\sum_{i=1}^{n}\ind\set{\bbA_I(\bx,\by)(\Tilde{x}_i)\neq \Tilde{y}_i}}=\IRisk^n(\bbA_I;\calD)$.

In the other direction, given a transductive learner $\bbA_T$, if it's guarantee doesn't depend on the test set size $m$ (i.e., holds even when $m=1$), we can consider an inductive learner $\bbA_I$ that outputs a predictor which just runs the transductive learner at test-time defined as
\begin{equation}
    \label{eqn:transductive-to-inductive}
    \forall x\in \calX: \bbA_I(\bx,\by)(x)=\bbA_T(\bx,\by,x),
\end{equation}
ensuring $\IRisk^{n}(\bbA_I;\calD)=\TRisk^{n,1}(\bbA_T;\calD)$.

More generally, if the transductive learner $\bbA_T$ does rely on having multiple test examples, e.g., $m=n$ as in our case, we can randomly split the training set, using some of the training examples as test examples:
\begin{equation}
    \label{eqn:transductive-to-inductive-2}
     \forall x\in \calX: \bbA_I(\bx,\by)(x)=\bbA_T(\bx',\by',\bx''\cup\set{x})(x),
\end{equation}
where $\bx'$ and $\bx''$ are random disjoint subsets of $\bx$ of size $\floor{\frac{n}{2}}$ and $\floor{\frac{n}{2}}-1$, and $\bx''\cup \set{x}$ is a random permutation of the concatenation. This ensures
\begin{align*}
    \IRisk^{n}(\bbA_I;\calD)&= \Ex\insquare{\ind\set{\bbA_T(\bx',\by',\bx''\cup \set{x})(x)\neq y}}= \underset{\substack{(\bx,\by)\sim \calD^{n/2}\\(\tbx,\tby)\sim\calD^{n/2}}}{\Ex}\Ex_{i\sim {\rm Unif}[n/2]} \insquare{\ind\set{\bbA_T(\bx,\by,\tbx)(\Tilde{x}_i)\neq \Tilde{y}_i}}\\
    &= \underset{\substack{(\bx,\by)\sim \calD^{n/2}\\(\tbx,\tby)\sim\calD^{n/2}}}{\Ex} \frac{2}{n} \sum_{i=1}^{\frac n2} \ind\set{\bbA_T(\bx,\by,\tbx)(\Tilde{x}_i)\neq \Tilde{y}_i}=\TRisk^{\frac n2,\frac n2}(\bbA_T;\calD).
\end{align*}

Why is the robust setting different? We can still reduce transductive to inductive just the same. Given an inductive learner $\bbA_I$, the construction of \prettyref{eqn:inductive-to-transductive} is still valid, and we have $\TRisk^{n,m}_\calU(\bbA_T;\calD) \leq \IRisk^{n}_{\calU}(\bbA_I;\calD)$.

But what happens in the reverse direction? If transductive learner $\bbA_T$ doesn't rely on having multiple test examples, i.e., its guarantee doesn't depend on $m$ and is valid even if $m=1$, the construction in \prettyref{eqn:transductive-to-inductive} can still be used, and we have $\IRisk_\calU^{n}(\bbA_I;\calD)=\TRisk_\calU^{n,1}(\bbA_T;\calD)$.
This reduction has the potential of aiding in designing robust inductive learning methods, but it relies on the transductive method not depending on the number of test examples, or equivalently referred to as 1-point transductive learners (e.g., the one-inclusion graph prediction algorithm due to \citet{DBLP:journals/iandc/HausslerLW94}). Unfortunately, this is not the case for our transductive learner $\bbA_T$ (presented in \prettyref{sec:main-results}) which requires $m=n$.

Trying to apply the other reduction from \prettyref{eqn:transductive-to-inductive-2} and its analysis in the transductive setting, we would need to implement: $\bbA_T(\bx',\by',\bz''\cup\set{z})$. To apply our transductive learner $\bbA_T$, we would need not the subset of training points $\bx''$, but rather their perturbations $\bz''$. But how could we obtain this? This is not given to us. If $x \in \calU(x)$, we can try using $z''_i=x''_i$, i.e. $\bbA_T(\bx',\by',\bx''\cup \set{z})$, for which we get the following inductive error:
\begin{equation*}
    \IRisk_{\calU}^{n}(\bbA_I;\calD)= \underset{\substack{(\bx,\by)\sim \calD^{n/2}\\(\Tilde{\bx},\Tilde{\by})\sim\calD^{n/2}}}{\Ex}\insquare{ \Ex_{i\sim {\rm Unif}[n/2]} \sup_{\Tilde{z}_i\in\calU(\Tilde{x}_i)} \ind\set{\bbA_T(\bx,\by,\tbx_{1:i}, \Tilde{z}_i,\tbx_{i+1:\frac{n}{2}})(\Tilde{z}_i)\neq y_i} }.
\end{equation*}
But the right hand side here is only a (loose) upper bound on $\TRisk^{n/2,n/2}_\calU(\bbA_T;\calD)$. Specifically, in the above, the supremum representing the adversary, comes in after knowing which one of the $\frac n2$ points we are evaluating on. On the other hand, in a true transductive setting, i.e., in the definition of $\TRisk_\calU^{n/2,n/2}$, the adversary needs to commit to a perturbation that would be bad (for us) on all $\frac n2$ of the points.  In a sense, the fact that the adversary must perturb all points, and we can leverage knowledge of these perturbations, is what restricts the power of the adversary in the transductive setting, and allows us to better protect against adversarial attacks affecting many examples (we might still get a few examples wrong, but that's OK).

\paragraph{Proper vs.~Improper} Another issue where we see a difference between inductive and transductive adversarially robust learning is with regards to whether the learning can be proper. \citet{pmlr-v99-montasser19a} showed that learning some VC classes in the inductive setting necessarily requires improper learning. Specifically, there are classes $\calH$ with constant VC dimension that are not robustly PAC learnable with any inductive proper learner $\bbA_I:(\calX\times\calY)^*\to \calH$ which is constrained to outputting a predictor in $\calH$. Even in the case of robust realizability with respect to $\calU^{-1}(\calU)$, i.e., $\OPT_{\calU^{-1}(\calU)}=0$, we can still adapt the construction of \citet{pmlr-v99-montasser19a} to conclude that improper learning is needed in the inductive setting, whereas for transductive learning, our learner from \prettyref{sec:main-results} is proper. But this isn't surprising, and also in the standard (non-robust) setting we can expect differences in properness between transductive and inductive. 

We mentioned that any transductive non-robust learner can be transformed to an inductive learner, using the reduction in \prettyref{eqn:transductive-to-inductive-2}. But even if the transductive learner is proper, the resulting inductive learner is not. And furthermore, any improper transductive learner, whether non-robust or robust, can be transformed to a proper transductive learner. Specifically, for any transductive learner $\bbA_T$ and any input $(\bx,\by,\tbz)\in (\calX\times\calY)^n\times \calX^m$, we can project the labeling $\bbA_T(\bx,\by,\tbz)$ to the closest \emph{proper} labeling in the set $\Gamma_\calH(\tbz)=\set{(h(\tbz)): h\in \calH}$. In the realizable setting, when $\exists h\in\calH \text{ s.t. }\Risk_\calU(h;\calD)=0$, we are guaranteed that whenever $\bbA(\bx,\by,\tbz)$ has $\epsilon$ error then the \emph{proper} labeling has $2\epsilon$ error. In the agnostic setting, we are guaranteed that the \emph{proper} labeling will incur robust error at most $3\inf_{h\in\calH} \Risk_\calU(h;\calD) + 2\epsilon$ whenever $\bbA(\bx,\by,\tbz)$ has robust error of $\inf_{h\in\calH} \Risk_\calU(h;\calD) + \epsilon$. We therefore see that in the transductive setting, improperness can never buy a significant advantage, and we should not be surprised that learning that must be improper in the inductive setting can be proper in the transductive one.

\section{Proofs}

We start with proving a helpful lemma that extends the classic Sauer-Shelah-Perles lemma for the robust setting.

\begin{lem} [Sauer's lemma for $\Rdim_\calU(\calH)$]
\label{lem:sauer-robust}
For any class $\calH$, any perturbation set $\calU$, and any sequence of points $z_1,\dots, z_n\in \calX$,
{\small
\[\abs{\Pi^{\calU}_{\calH}(z_1,\dots,z_{n})} \triangleq \abs{\left\{ ( h(z_1),\ldots, h(z_{n}) ) \middle|
  \begin{subarray}{c} \exists x_{1},\dots, x_{n} \in \calX, \exists h \in \calH:\\
     z_i \in \calU(x_i) \wedge h(\calU(x_i))=h(z_i)\forall 1\leq i\leq n 
  \end{subarray}\right\}} \leq {n \choose \leq \Rdim_{\calU}(\calH) } \triangleq \sum_{i=0}^{\Rdim_{\calU}(\calH)} {n \choose i}. 
\]
}
\end{lem}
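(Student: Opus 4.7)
The classical Sauer-Shelah-Perles lemma admits a clean proof via Pajor's inequality: for any finite $S \subseteq \{\pm 1\}^n$ one has $|S| \le |\{T \subseteq [n] : S \text{ shatters } T\}|$, and hence if every shattered $T$ satisfies $|T| \le d$, then $|S| \le \binom{n}{\le d}$. My plan is to apply Pajor's inequality to the set $S := \Pi^{\calU}_{\calH}(z_1,\dots,z_n) \subseteq \{\pm 1\}^n$, and then show that any subset of $[n]$ shattered by $S$ in the usual sense corresponds to a relaxed $\calU$-robustly shattered subsequence of $z_1,\dots,z_n$. The bound $\binom{n}{\le \Rdim_{\calU}(\calH)}$ then follows immediately.

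The only non-routine step is the second one. Concretely, I would fix $T = \{i_1,\dots,i_k\} \subseteq [n]$ that is shattered by $S$ in the classical sense, i.e., for every $(y_1,\dots,y_k) \in \{\pm 1\}^k$ there exists some $(\hat y_1,\dots,\hat y_n) \in \Pi^{\calU}_{\calH}(z_1,\dots,z_n)$ with $\hat y_{i_j} = y_j$ for all $j$. Unpacking the definition of $\Pi^{\calU}_{\calH}$, this yields witnesses $x_1,\dots,x_n \in \calX$ and $h \in \calH$ (both allowed to depend on the target labeling $(y_1,\dots,y_k)$) with $z_i \in \calU(x_i)$ and $h(\calU(x_i)) = \hat y_i$ for every $i \in [n]$. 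Restricting to $i \in T$ and setting $x^{y_j}_j := x_{i_j}$ gives exactly the witnesses required by \prettyref{def:robustshatter-dim-relax} for the subsequence $z_{i_1},\dots,z_{i_k}$: each $z_{i_j}$ lies in $\calU(x^{y_j}_j)$ and the single hypothesis $h$ labels all of $\calU(x^{y_j}_j)$ by $y_j$. Therefore $z_{i_1},\dots,z_{i_k}$ is relaxed $\calU$-robustly shattered by $\calH$, forcing $k \le \Rdim_{\calU}(\calH)$.

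Combining this with Pajor's inequality gives
\[
|\Pi^{\calU}_{\calH}(z_1,\dots,z_n)| \;\le\; \bigl|\{T \subseteq [n] : |T| \le \Rdim_{\calU}(\calH)\}\bigr| \;=\; \binom{n}{\le \Rdim_{\calU}(\calH)},
\]
which is the desired conclusion. If one prefers to avoid invoking Pajor's inequality, an alternative route is a direct induction on $n$ mimicking the standard Sauer-Shelah proof: erase $z_n$, pass to the two natural projections onto $\{\pm 1\}^{n-1}$, and observe that a labeling appearing in both projections gives rise to a relaxed $\calU$-robust shattering witness that can be extended by $z_n$. The genuinely new ingredient either way is the observation in the previous paragraph, that classical shattering of a subset by $\Pi^{\calU}_{\calH}$ is already strong enough to imply relaxed $\calU$-robust shattering (because the witnesses $x_i$ and $h$ are allowed to vary with the target labeling in \prettyref{def:robustshatter-dim-relax}), so I expect this to be the only place requiring care.
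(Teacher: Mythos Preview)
Your proposal is correct and is essentially the same argument as the paper's, just packaged differently. The paper proves the stronger Pajor-type claim
\[
\abs{\Pi^{\calU}_{\calH}(z_1,\dots,z_n)} \;\le\; \abs{\set{S\subseteq\{z_1,\dots,z_n\}: S \text{ is relaxed }\calU\text{-robustly shattered by }\calH}}
\]
directly by induction on $n$ (the standard ``erase one coordinate, split into $Y_0$ and $Y_1$'' argument, with a subclass $\calH'\subseteq\calH$ playing the role of the hypotheses that realize both signs robustly at $z_1$). You instead invoke the classical Pajor inequality for $S=\Pi^{\calU}_{\calH}(z_1,\dots,z_n)\subseteq\{\pm1\}^n$ as a black box and then supply the one genuinely new ingredient: that classical shattering of $T\subseteq[n]$ by this $S$ already furnishes, for every target labeling, the witnesses $x^{y_j}_j$ and $h\in\calH$ required by \prettyref{def:robustshatter-dim-relax}. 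This is exactly right, and it is the same observation that drives the paper's inductive step; your reading of the quantifier structure in the definition (the $x$'s and $h$ may depend on the full labeling $(y_1,\dots,y_k)$) matches how the paper uses it. Your route is a bit more modular, separating the purely combinatorial Pajor bound from the robust-shattering translation; the paper's route is more self-contained but otherwise identical in substance. The alternative induction you sketch at the end is precisely what the paper does.
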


\begin{proof}
The proof will follow a standard argument that is used to prove Sauer-Shela-Perles lemma (see e.g., \cite{shalev2014understanding}). Specifically, it suffices to prove the following stronger claim:
\begin{equation}
\label{eqn:sauer-robust-strong}
    \abs{\Pi^{\calU}_{\calH}(z_1,\dots,z_{n})} \leq \abs{\set{ S\subseteq \set{z_1,\dots,z_n}: S \text{ is relaxed }\calU\text{-robustly shattered by }\calH }}.
\end{equation}
This is because 
\[\abs{\set{ S\subseteq \set{z_1,\dots,z_n}: S \text{ is relaxed }\calU\text{-robustly shattered by }\calH }} \leq {n \choose \leq \Rdim_{\calU}(\calH) }.\]
We will prove \prettyref{eqn:sauer-robust-strong} by induction on $n$. When $n=1$, both sides of \prettyref{eqn:sauer-robust-strong} either evaluate to 1 or 2 (the empty set is always considered to be relaxed $\calU$-robustly shattered by $\calH$). 
When $n>1$, assume that \prettyref{eqn:sauer-robust-strong} holds for sequences of length $k<n$. Let $C=\set{z_1,\dots,z_n}$ and $C'=\set{z_2,\dots,z_n}$. Consider the following two sets:
\[Y_0=\set{\inparen{y_2,\dots,y_n}: \inparen{+1,y_2,\dots,y_n}\in \Pi^{\calU}_{\calH}(z_1,\dots,z_{n}) \vee \inparen{-1,y_2,\dots,y_n} \in \Pi^{\calU}_{\calH}(z_1,\dots,z_{n})},\]
and
\[Y_1=\set{\inparen{y_2,\dots,y_n}: \inparen{+1,y_2,\dots,y_n}\in \Pi^{\calU}_{\calH}(z_1,\dots,z_{n}) \wedge \inparen{-1,y_2,\dots,y_n} \in \Pi^{\calU}_{\calH}(z_1,\dots,z_{n})}.\]
Observe that $\abs{\Pi^{\calU}_{\calH}(z_1,\dots,z_{n})}=\abs{Y_0}+\abs{Y_1}$. Additionally, note that by definition of $Y_0$, $Y_0\subseteq \Pi^{\calU}_{\calH}(z_2,\dots,z_{n})$. Thus, by the inductive assumption,
\begin{align*}
    \abs{Y_0}\leq \abs{\Pi^{\calU}_{\calH}(z_2,\dots,z_{n})} &\leq \abs{\set{ S\subseteq C': S \text{ is relaxed }\calU\text{-robustly shattered by }\calH }} \\ 
    &= \abs{\set{S\subseteq C: z_1\notin S \wedge S \text{ is relaxed }\calU\text{-robustly shattered by }\calH}}.
\end{align*}
Next, define $\calH' \subseteq \calH$ to be
{\small
\[\calH' = \set{h\in \calH: \exists h'\in \calH, \bx_{2:n},\tbx_{2:n} \in \calU^{-1}(\bz_{2:n}) \text{ s.t. } h(\calU(x_1))=-h'(\calU(x_1))\wedge h(\calU(\bx_{2:n}))=h'(\calU(\tbx_{2:n}))}.\]
}
Observe that if a set $S\subseteq C'$ is relaxed $\calU$-robustly shattered by $\calH'$, then $S\cup \set{z_1}$ is also relaxed $\calU$-robustly shattered by $\calH'$ and vice versa. Observe also that, by definition, $Y_1= \Pi^\calU_{\calH'}(z_2,\dots,z_n)$. By applying the inductive assumption on $\calH'$ and $C'$ we obtain that
\begin{align*}
\abs{Y_1}=\abs{\Pi^\calU_{\calH'}(z_2,\dots,z_n)} &\leq \abs{\set{ S\subseteq C': S \text{ is relaxed }\calU\text{-robustly shattered by }\calH' }}\\
&= \abs{\set{S\subseteq C': S\cup \set{z_1} \text{ is relaxed }\calU\text{-robustly shattered by }\calH'}}\\
&= \abs{\set{S\subseteq C: z_1\in S \wedge S \text{ is relaxed }\calU\text{-robustly shattered by }\calH'}}\\
&\leq \abs{\set{S\subseteq C: z_1\in S \wedge S \text{ is relaxed }\calU\text{-robustly shattered by }\calH}}.
\end{align*}
Overall, we have shown that
\begin{align*}
    \abs{\Pi^{\calU}_{\calH}(z_1,\dots,z_{n})}&=\abs{Y_0}+\abs{Y_1}\\
    &\leq \abs{\set{S\subseteq C: z_1\notin S \wedge S \text{ is relaxed }\calU\text{-robustly shattered by }\calH}} \\
    &~~~~~~~~+ \abs{\set{S\subseteq C: z_1\in S \wedge S \text{ is relaxed }\calU\text{-robustly shattered by }\calH}}\\
    &= \abs{\set{ S\subseteq \set{z_1,\dots,z_n}: S \text{ is relaxed }\calU\text{-robustly shattered by }\calH }},
\end{align*}
which concludes our proof.
\end{proof}

\subsection{Realizable Setting}

\begin{proof}[of \prettyref{thm:realizable-train-perturb}]
It suffices to show that 
$$\underset{\substack{(\bx,\by)\sim \calD^n\\(\Tilde{\bx},\Tilde{\by})\sim\calD^n}}{\Prob} \insquare{ \exists \bz \in \calU(\bx), \exists \tbz \in \calU(\Tilde{\bx}), \exists \hat{h} \in \Delta^{\calU}_{\calH}(\bz,\by,\tbz): \err_{\tbz,\Tilde{\by}}(\hat{h}) > \epsilon } \leq \delta.$$
Observe that since $\OPT_{\calU^{-1}(\calU)}=\inf_{h\in\calH} \Prob_{(x,y)\sim\calD} \insquare{\exists z\in\calU(x),\exists \Tilde{x}\in\calU^{-1}(z): h(\Tilde{x})\neq y} = 0$, it holds by definition of $\Delta_\calH^\calU$ (see \prettyref{eqn:learner-realizable}) that the set $\Delta_\calH^\calU(\bz,\by,\tbz)$ is non-empty with probability 1.

We will first start with a standard observation stating that sampling two iid sequences of length $n$, $(\bx,\by)\sim \calD^n$ and $(\Tilde{\bx},\Tilde{\by})\sim \calD^n$, is equivalent to sampling a single iid sequence of length $2n$, $(\bx,\by)\sim \calD^{2n}$, and then randomly splitting it into two sequences of length $n$ (using a permutation $\sigma$ of $\set{1,\dots,2n}$ sampled uniformly at random). Thus, it follows that
\begin{align*}
    \underset{\substack{(\bx,\by)\sim \calD^n\\(\Tilde{\bx},\Tilde{\by})\sim\calD^n}}{\Prob} &\insquare{ \exists \bz \in \calU(\bx), \exists \tbz \in \calU(\Tilde{\bx}), \exists \hat{h} \in \Delta^{\calU}_{\calH}(\bz,\by,\tbz): \err_{\tbz,\Tilde{\by}}(\hat{h}) > \epsilon }= \Ex_{(\bx,\by)\sim\calD^{2n}} \insquare{ \Prob_{\sigma} \insquare{E_{\sigma, \bz}\middle| (\bx,\by)} },
\end{align*}
where $\sigma$ is a permutation of $[2n]$ sampled uniformly at random and $E_{\sigma, \bz}$ is defined as:
\begin{equation*}
    E_{\sigma, \bz} = \set{\exists \bz_{\sigma(1:2n)} \in \calU(\bx_{\sigma(1:2n)}),\exists \hat{h} \in \Delta^\calU_\calH(\bz_{\sigma(1:n)}, \by_{\sigma(1:n)}, \bz_{\sigma(n+1:2n)}): \err_{\bz_{\sigma(n+1:2n)}, \by_{\sigma(n+1:2n)}}(\hat{h}) > \epsilon}.
\end{equation*}

\paragraph{High error on $\bz$'s implies high error on $\bx$'s.}It suffices to show that for any $(\bx,\by)\sim\calD^{2n}$ such that $\exists h^* \in \calH$ with $h^*(\calU^{-1}(\calU)(\bx))=\by$ (which occurs with probability one): $\Prob_{\sigma} \insquare{ E_{\sigma, \bz} \middle | (\bx,\by) } \leq \delta$. To this end, we will start by showing that the event $E_{\sigma, \bz}$ implies the following event $E_{\sigma, \bx}$:
\begin{equation*}
    E_{\sigma, \bx} = \set{\exists \bz_{\sigma(1:2n)} \in \calU(\bx_{\sigma(1:2n)}),\exists \hat{h} \in \Delta^\calU_\calH(\bz_{\sigma(1:n)}, \by_{\sigma(1:n)}, \bz_{\sigma(n+1:2n)}): \err_{\bx_{\sigma(n+1:2n)}, \by_{\sigma(n+1:2n)}}(\hat{h}) > \epsilon}.
\end{equation*}

In words, in case there are adversarial perturbations $\bz_{\sigma(1:2n)} \in \calU(\bx_{\sigma(1:2n)})$ and a predictor $\hat{h} \in \Delta^\calU_\calH(\bz_{\sigma(1:n)}, \by_{\sigma(1:n)}, \bz_{\sigma(n+1:2n)})$ with many mistakes on the adversarial perturbations:\\ $\err_{\bz_{\sigma(n+1:2n)}, \by_{\sigma(n+1:2n)}}(\hat{h}) > \epsilon$, then this implies that $\hat{h}$ makes many mistakes on the original non-adversarial test sequence: $\err_{\bx_{\sigma(n+1:2n)}, \by_{\sigma(n+1:2n)}}(\hat{h}) > \epsilon$. This is because for any $\bz_{\sigma(1:2n)} \in \calU(\bx_{\sigma(1:2n)})$, by definition of $\Delta^\calU_\calH$ (see \prettyref{eqn:learner-realizable}), any $\hat{h}\in \Delta^\calU_\calH(\bz_{\sigma(1:n)}, \by_{\sigma(1:n)}, \bz_{\sigma(n+1:2n)})$ robustly labels the perturbations $\bz_{\sigma(1:2n)}$: $\hat{h}(\calU^{-1}(\bz_{\sigma(1:2n)})) = \hat{h}(\bz_{\sigma(1:2n)})$. That is, 
\[\inparen{\forall 1 \leq i \leq 2n} \inparen{\forall \Tilde{x}\in \calU^{-1}(z_{\sigma(i)})}: \hat{h}(\Tilde{x}) = \hat{h}(z_{\sigma(i)}).\]
By definition of $\calU^{-1}$, it holds that $\bx_{\sigma(1:2n)}\in \calU^{-1}(\bz_{\sigma(1:2n)})$. Thus, it follows that $\hat{h}(\bz_{\sigma(n+1:2n)}) = \hat{h}(\bx_{\sigma(n+1:2n)})$, and therefore, event $E_{\sigma, \bz}$ implies event $E_{\sigma, \bx}$. 

\paragraph{Finite robust labelings on $\bx$'s} Based on the above, it suffices now to show that:\\
$\Prob_{\sigma} \insquare{ E_{\sigma, \bx} \middle | (\bx,\by) } \leq \delta$. To this end, we will show that for any permutation $\sigma$, any $\bz_{\sigma(1:2n)}\in \calU(\bx_{\sigma(1:2n)})$, and any $\hat{h}\in \Delta^\calU_\calH(\bz_{\sigma(1:n)}, \by_{\sigma(1:n)}, \bz_{\sigma(n+1:2n)})$ it holds that the labeling $\hat{h}(\bx_{\sigma(1:2n)})$ is included in a finite set of possible behaviors $\Pi^{\calU}_{\calH}$ defined on the entire sequence $\bx=(x_1,\dots,x_{2n})$ by:
\[\mathclap{ \Pi^{\calU}_{\calH}(x_1,\dots,x_{2n}) = \left\{ ( h(x_1), h(x_2), \ldots, h(x_{2n}) ) \middle|
  \begin{subarray}{c} \exists z_{1}\in \calU(x_1),\dots, z_{2n}\in \calU(x_{2n}), \\
    \exists h \in \calH: h(\calU^{-1}(z_i))=h(x_i)\forall 1\leq i\leq 2n 
  \end{subarray}\right\}}
\]

Consider an arbitrary permutation $\sigma$ and an arbitrary $\bz_{\sigma(1:2n)}\in \calU(\bx_{\sigma(1:2n)})$. For any $\hat{h}\in \Delta^\calU_\calH(\bz_{\sigma(1:n)}, \by_{\sigma(1:n)}, \bz_{\sigma(n+1:2n)})$, by definition of $\Delta^\calU_\calH$, it holds that $\hat{h}\inparen{\calU^{-1}(\bz_{\sigma(1:n)})}=\by_{\sigma(1:n)}$ and $\hat{h}(\calU^{-1}(\bz_{\sigma(n+1:2n)})) = \hat{h}(\bz_{\sigma(n+1:2n)})$. Therefore, $\bz_{\sigma(1:2n)} \in \calU(\bx_{\sigma(1:2n)})$ and the predictor $\hat{h}\in \calH$ are witnesses that satisfy the following:

\[\inparen{\forall 1 \leq i \leq 2n}: z_{\sigma(i)} \in \calU(x_{\sigma(i)}) \wedge \hat{h}(\calU^{-1}(z_{\sigma(i)}))=\hat{h}(x_{\sigma(i)}).\]

Thus, by definition of $\Pi^{\calU}_{\calH}$, it holds that $\hat{h}(\bx_{\sigma(1:2n)}) \in \Pi^{\calU}_{\calH}(x_1,\dots,x_{2n})$. This allows us to establish that the event $E_{\sigma,\bx}$ implies the event that there exists a labeling $\hat{h}(\bx_{\sigma(1:2n)}) \in \Pi^{\calU}_{\calH}(x_1,\dots,x_{2n})$ that achieves zero loss on the training examples $\err_{\bx_{\sigma(1:n)},\by_{\sigma(1:n)}}(\hat{h})=0$, but makes error more than $\epsilon$ on the test examples $\err_{\bx_{\sigma(n+1:2n)},\by_{\sigma(n+1:2n)}}(\hat{h})>\epsilon$. Specifically,
\begin{align*}
    \Prob_{\sigma} \insquare{ E_{\sigma, \bx} }
    &\leq \Prob_\sigma \insquare{\exists \hat{h}\in \Pi^{\calU}_{\calH}(x_1,\dots,x_{2n}): \err_{\bx_{\sigma(1:n)},\by_{\sigma(1:n)}}(\hat{h})=0 \wedge \err_{\bx_{\sigma(n+1:2n)},\by_{\sigma(n+1:2n)}}(\hat{h})>\epsilon}\\
    &\overset{(i)}{\leq} \abs{ \Pi^{\calU}_{\calH}(x_1,\dots,x_{2n}) } 2^{\ceil{-\epsilon n}} \overset{(ii)}{\leq} \inparen{2n}^{\Rdim_{\calU^{-1}}(\calH)} 2^{\ceil{-\epsilon n}},
\end{align*}
where inequality $(i)$ follows from applying a union bound over labelings $\hat{h}\in \Pi^{\calU}_{\calH}(x_1,\dots,x_{2n})$, and observing that for any such fixed $\hat{h}$:
\[\Prob_\sigma \insquare{ \err_{\bx_{\sigma(1:n)},\by_{\sigma(1:n)}}(\hat{h})=0 \wedge \err_{\bx_{\sigma(n+1:2n)},\by_{\sigma(n+1:2n)}}(\hat{h})>\epsilon } \leq 2^{-\ceil{\epsilon n}}. \]
To see this, suppose that $s=\sum_{i=1}^{2n} \ind\{\hat{h}(x_i)\neq y_i\}\geq \ceil{\epsilon n}$ (otherwise, the probability of the event above is zero). Now, when sampling a random permutation $\sigma$, the chance that all of the mistakes fall into the test split is at most $2^{-s}\leq 2^{-\ceil{\epsilon n}}$. Because if we pair the $s$ mistakes and any $s$ out of the $2n-s$ non-mistakes while fixing the remaining non-mistakes to be in the training split, then the chance that all the $s$ mistakes appear in the test split is at most $2^{-s}$. 

Finally, inequality $(ii)$ follows from applying Sauer's lemma on our introduced relaxed notion of robust shattering dimension (\prettyref{def:robustshatter-dim-relax}). Setting $\inparen{2n}^{\Rdim_{\calU^{-1}}(\calH)} 2^{\ceil{-\epsilon n}} \leq \delta$ and solving for $\epsilon$ yields the stated bound.
\end{proof}

\subsection{Agnostic Setting}

\begin{proof}[ of \prettyref{thm:agnostic-train-perturb}] 
Let $n\in\bbN$. For notational brevity, we write $\OPT = \OPT_{\calU^{-1}(\calU)}$.
We will assume that $\OPT$ (see \prettyref{eqn:opt}) is attained by some predictor $h^* \in \calH$\footnote{Otherwise, we can always choose a predictor $h^*\in\calH$ attaining $\OPT+\epsilon'$ for any small $\epsilon'>0$.}. For this fixed $h^*$, observe that by a standard Hoeffding bound, for $\epsilon_0=\sqrt{\frac{\ln(2/\delta)}{2n}}$, it holds that 
\begin{equation*}
    \underset{\substack{(\bx,\by)\sim \calD^n\\(\Tilde{\bx},\Tilde{\by})\sim\calD^n}}{\Prob} \insquare{ \inparen{\Risk_{\calU^{-1}(\calU)}(h^*; \bx,\by) \leq \OPT +\epsilon_0}  \wedge \inparen{\Risk_{\calU^{-1}(\calU)}(h^*; \tbx,\tby) \leq \OPT+\epsilon_0} } \geq 1-\delta.
\end{equation*}

By definition of $\Risk_{\calU^{-1}(\calU)}$ (see \prettyref{eqn:risks}), this implies that
\begin{equation*}
    \underset{\substack{(\bx,\by)\sim \calD^n\\(\tbx,\tby)\sim\calD^n}}{\Prob} \insquare{ \inparen{\forall \bz \in \calU(\bx): \Risk_{\calU^{-1}}(h^*; \bz,\by) \leq \OPT +\epsilon_0}  \wedge \inparen{\forall \tbz \in \calU(\tbx): \Risk_{\calU^{-1}}(h^*;\tbz) \leq \OPT+\epsilon_0} } \geq 1-\delta.
\end{equation*}

This implies that
\begin{equation*}
    \underset{\substack{(\bx,\by)\sim \calD^n\\(\tbx,\tby)\sim\calD^n}}{\Prob} \insquare{ \forall \bz\in \calU(\bx), \forall \tbz \in \calU(\tbx): \min_{h\in\calH} \max\set{\Risk_{\calU^{-1}}(h; \bz,\by), \Risk_{\calU^{-1}}(h;\tbz)} \leq \OPT +\epsilon_0 } \geq 1-\delta.
\end{equation*}

By \prettyref{eqn:learner-agnostic}, we have
{\small
\begin{equation*}
    \underset{\substack{(\bx,\by)\sim \calD^n\\(\tbx,\tby)\sim\calD^n}}{\Prob} \insquare{ \forall \bz \in \calU(\bx), \forall \tbz \in \calU(\tbx), \forall \hat{h} \in \Delta^\calU_\calH(\bz,\by,\tbz): \max\set{\Risk_{\calU^{-1}}(\hat{h}; \bz,\by), \Risk_{\calU^{-1}}(\hat{h};\tbz)} \leq \OPT +\epsilon_0 } \geq 1-\delta.
\end{equation*}}

That is, we have established that with high probability over the drawings of $(\bx,\by),(\tbx,\tby)\sim \calD^n$: for any adversarial perturbations $\bz\in \calU(\bx)$, $\tbz\in\calU(\tbx)$, and any predictor $\hat{h}\in \Delta^{\calU}_{\calH}(\bz,\by,\tbz)$:
\begin{enumerate}
    \item $\hat{h}$ achieves low robust error on the training examples: $\Risk_{\calU^{-1}}(\hat{h}; \bz,\by)\leq \OPT +\epsilon_0$. 
    \item $\hat{h}$ is robust (but not necessarily correct) on many of the test examples: $\Risk_{\calU^{-1}}(\hat{h};\tbz)\leq \OPT+\epsilon_0$.
\end{enumerate}

\paragraph{VC Guarantee} Next, to show that $\hat{h}$ achieves low error on the test examples $\tbz$, we will combine the properties above with a standard guarantee in the transductive setting from VC theory, which states that for $\epsilon=O\inparen{\sqrt{\frac{\vc(\calH)+\log(1/\delta)}{n}}}$:
\begin{equation*}
\label{eqn:uniform-transductive-agnostic}
    \underset{\substack{(\bx,\by)\sim \calD^n\\(\tbx,\tby)\sim\calD^n}}{\Prob} \insquare{\forall h\in \calH: \abs{ \err_{\bx,\by}(h) - \err_{\tbx,\tby}(h)} \leq \epsilon } \geq 1-\delta.
\end{equation*}

Thus, for $\epsilon=O\inparen{\sqrt{\frac{\vc(\calH)+\log(1/\delta)}{n}}}$:
\begin{equation*}
    \underset{\substack{(\bx,\by)\sim \calD^n\\(\tbx,\tby)\sim\calD^n}}{\Prob} \left[\begin{array}{lr}
        \forall \bz \in \calU(\bx), \forall \tbz \in \calU(\tbx), \forall \hat{h} \in \Delta^\calU_\calH(\bx,\by,\bz):\\ \max\set{\Risk_{\calU^{-1}}(\hat{h}; \bz,\by), \Risk_{\calU^{-1}}(\hat{h};\tbz)} \leq \OPT +\epsilon \wedge \abs{ \err_{\bx,\by}(\hat{h}) - \err_{\tbx,\tby}(\hat{h})} \leq \epsilon\\
        \end{array}\right] \geq 1-2\delta.
\end{equation*}

Finally, observe that for any predictor $\hat{h}\in \Delta^\calU_\calH(\bz,\by,\tbz)$ satisfying\\ $\max\set{\Risk_{\calU^{-1}}(\hat{h}; \bz,\by), \Risk_{\calU^{-1}}(\hat{h};\tbz)} \leq \OPT +\epsilon$ and $|\err_{\bx,\by}(\hat{h}) - \err_{\tbx,\tby}(\hat{h})| \leq \epsilon$, we can deduce that:
\begin{itemize}
    \item $\err_{\bx,\by}(\hat{h}) \leq \OPT+\epsilon$ (since $\Risk_{\calU^{-1}}(\hat{h}; \bz,\by) \leq \OPT + \epsilon$ and $\bx\in\calU^{-1}(\bz)$). Therefore, $\err_{\tbx,\tby}(\hat{h}) \leq \OPT+2\epsilon$ (since $|\err_{\bx,\by}(\hat{h}) - \err_{\tbx,\tby}(\hat{h})| \leq \epsilon$).
    \item Since $\err_{\tbx,\tby}(\hat{h}) \leq \OPT+2\epsilon$ and $\Risk_{\calU^{-1}}(\hat{h};\bz) \leq \OPT +\epsilon$, this implies that $\err_{\bz,\tby}(\hat{h})\leq 2\OPT+3\epsilon$.
\end{itemize}

\paragraph{A refined bound}  
We will show that 
\begin{equation*}
\label{eqn:refined-transductive-agnostic}
\begin{split}
    \underset{\substack{(\bx,\by)\sim \calD^n\\(\tbx,\tby)\sim\calD^n}}{\Prob} \biggl[&
        \forall \bz \in \calU(\bx), \forall \tbz \in \calU(\tbx), \forall \hat{h} \in \Delta^\calU_\calH(\bz,\by,\tbz):\\ &\max\set{\Risk_{\calU^{-1}}(\hat{h}; \bz,\by), \Risk_{\calU^{-1}}(\hat{h};\tbz)} \leq \OPT +\Tilde{\epsilon} \wedge \abs{ \err_{\bx,\by}(\hat{h}) - \err_{\tbx,\tby}(\hat{h})} \leq \Tilde{\epsilon} \biggr] \geq 1-\delta,
\end{split}
\end{equation*}
for a smaller $\Tilde{\epsilon}$ that scales with $\OPT$ and $\Rdim_{\calU^{-1}}(\calH)$ (instead of $\vc(\calH)$). To this end, it suffices to show that for any fixed $(\bx,\by)\sim \calD^{2n}$:
{\small
\begin{equation*}
    \underset{\sigma}{\Prob} \left[\begin{array}{lr}
        \forall \bz_{\sigma(1:2n)} \in \calU(\bx_{\sigma(1:2n)}), \forall \hat{h} \in \Delta^\calU_\calH(\bz_{\sigma(1:n)},\by_{\sigma(1:n)},\bz_{\sigma(n+1:2n)}):\\
         \max\set{\Risk_{\calU^{-1}}(\hat{h}; \bz_{\sigma(1:n)},\by_{\sigma(1:n)}), \Risk_{\calU^{-1}}(\hat{h};\bz_{\sigma(n+1,2n)})} \leq \OPT +\Tilde{\epsilon} \wedge \abs{ \err_{\bx,\by}(\hat{h}) - \err_{\tbx,\tby}(\hat{h})} \leq \Tilde{\epsilon}\\
        \end{array}\right] \geq 1-\delta,
\end{equation*}
}
where $\sigma$ is a permutation of $\set{1,2,3,\dots,2n}$ sampled uniformly at random. 

\noindent We will show that for any permutation $\sigma$, any $\bz_{\sigma(1:2n)}\in \calU(\bx_{\sigma(1:2n)})$, \\
and any $\hat{h}\in \Delta^\calU_\calH(\bz_{\sigma(1:n)}, \by_{\sigma(1:n)}, \bz_{\sigma(n+1:2n)})$ it holds that the labeling $\hat{h}(\bx_{\sigma(1:2n)})$ is included in a finite set of possible behaviors $\Pi^{\calU}_{\calH}$ defined on the entire sequence $\bx=(x_1,\dots,x_{2n})$ by:
\[\mathclap{ \Pi^{\calU}_{\calH}(x_1,\dots,x_{2n}) = \left\{ ( h(x_1), h(x_2), \ldots, h(x_{2n}) ) \middle|
  \begin{subarray}{c} \exists I\subseteq [2n], \abs{I}\geq (1-\OPT-\epsilon_0)2n,\\
  \forall i\in I, \exists z_{i}\in \calU(x_i), \\
    \exists h \in \calH: h(\calU^{-1}(z_i))=h(x_i) \forall i \in I
  \end{subarray}\right\}}
\]
When it holds that $\Risk_{\calU^{-1}}(\hat{h}; \bz_{\sigma(1:n)},\by_{\sigma(1:n)})\leq \OPT +\epsilon_0$ and $\Risk_{\calU^{-1}}(\hat{h}; \bz_{\sigma(n+1:2n)})\leq \OPT +\epsilon_0$, by definition of $\Risk_{\calU^{-1}}$ (see \prettyref{eqn:risks}), it follows that the predictor $\hat{h}\in \calH$ and $\bz_{\sigma(1:2n)} \in \calU(\bx_{\sigma(1:2n)})$ are witnesses that satisfy the following:
\[\inparen{\exists I\subseteq [2n], \abs{I} \geq (1-\OPT-\epsilon)2n} \inparen{\forall i\in I}: z_{\sigma(i)} \in \calU(x_{\sigma(i)}) \wedge \hat{h}(\calU^{-1}(z_{\sigma(i)}))=\hat{h}(x_{\sigma(i)}).\]

Thus, by definition of $\Pi^{\calU}_{\calH}$, it holds that $\hat{h}(\bx_{\sigma(1:2n)}) \in \Pi^{\calU}_{\calH}(x_1,\dots,x_{2n})$. Then, observe that for any $\Tilde{\epsilon} > 0$
\begin{align*}
\Prob_\sigma &\insquare{\exists \hat{h}\in \Pi^{\calU}_{\calH}(x_1,\dots,x_{2n}): \inabs{ \err_{\bx_{\sigma(1:n)},\by_{\sigma(1:n)}}(\hat{h}) - \err_{\bx_{\sigma(n+1:2n)},\by_{\sigma(n+1:2n)}}(\hat{h})} > \Tilde{\epsilon}}\\
    &\overset{(i)}{\leq} \abs{ \Pi^{\calU}_{\calH}(x_1,\dots,x_{2n}) } e^{-\Tilde{\epsilon}^2 n} \overset{(ii)}{\leq} {2n \choose \leq (\OPT+\epsilon_0)2n}{(1-\OPT-\epsilon_0)2n \choose \leq \Rdim_{\calU^{-1}}(\calH)} e^{-\Tilde{\epsilon}^2n}\\ &\overset{(iii)}{\leq} 2^{H(\OPT+\epsilon_0)2n}\inparen{(1-\OPT-\epsilon_0)2n}^{\Rdim_{\calU^{-1}}(\calH)} e^{-\Tilde{\epsilon}^2 n},
\end{align*}
where inequality $(i)$ follows from applying a union bound over labelings $\hat{h}\in \Pi^{\calU}_{\calH}(x_1,\dots,x_{2n})$, and observing that for any such fixed $\hat{h}$:
\[\Prob_\sigma \insquare{\inabs{ \err_{\bx_{\sigma(1:n)},\by_{\sigma(1:n)}}(\hat{h}) - \err_{\bx_{\sigma(n+1:2n)},\by_{\sigma(n+1:2n)}}(\hat{h})} > \Tilde{\epsilon}} \leq e^{-\Tilde{\epsilon}^2n}.\]
Inequality $(ii)$ follows from the definition of $\Pi^\calU_\calH$ and applying Sauer's lemma on our introduced relaxed notion of robust shattering dimension (\prettyref{lem:sauer-robust}). Inequality $(iii)$ follows from bounds on the binomial coefficients, where $H$ is the entropy function. 

Setting $2^{H(\OPT+\epsilon_0)2n}\inparen{(1-\OPT-\epsilon_0)2n}^{\Rdim_{\calU^{-1}}(\calH)} e^{-\Tilde{\epsilon}^2 n}$ less than $\frac{\delta}{2}$ and solving for $\Tilde{\epsilon}$ yields:
\begin{align*}
    \Tilde{\epsilon} &\leq \sqrt{2\ln(2)H(\OPT+\epsilon_0)+\frac{\Rdim_{\calU^{-1}}(\calH)\ln\inparen{(1-\OPT-\epsilon_0)2n}+\ln(1/\delta)}{n}}\\
    &\leq \sqrt{2\ln(2)H(\OPT+\epsilon_0)} + \sqrt{\frac{\Rdim_{\calU^{-1}}(\calH)\ln\inparen{(1-\OPT-\epsilon_0)2n}+\ln(1/\delta)}{n}}\\
    &\leq \OPT+\epsilon_0 + \sqrt{\frac{\Rdim_{\calU^{-1}}(\calH)\ln\inparen{2n}+\ln(1/\delta)}{n}}
\end{align*}
Combining both events from above, we get that $\err_{\tbz,\tby}(\hat{h}) \leq \err_{\tbx,\tby}(\hat{h}) + \Risk_{\calU^{-1}}(\hat{h};\tbz) \leq \err_{\bx,\by}(\hat{h}) +\Tilde{\epsilon}+\Risk_{\calU^{-1}}(\hat{h};\tbz)\leq \OPT +\epsilon_0 +\Tilde{\epsilon}+\OPT+\epsilon_0=2\OPT+2\epsilon_0+\Tilde{\epsilon}\leq 3\OPT+3\epsilon_0+\sqrt{\frac{\Rdim_{\calU^{-1}}(\calH)\ln\inparen{2n}+\ln(1/\delta)}{n}}$.
\end{proof}

\section{Discussion}
\label{sec:dis}

\paragraph{Related Work} Adversarially robust learning has been mainly studied in the inductive setting \citep[see e.g.,][]{schmidt2018adversarially,DBLP:conf/nips/CullinaBM18,khim2018adversarial,bubeck2019adversarial,DBLP:conf/icml/YinRB19,pmlr-v99-montasser19a}. This includes studying what learning rules should be used for robust learning and how much training data is needed to guarantee low robust error. It is now known that any hypothesis class $\calH$ with finite VC dimension is robustly learnable, though sometimes improper learning is necessary and the best known upper bound on the robust error is exponential in the VC dimension \citep{pmlr-v99-montasser19a}. 

In transductive learning, the learner is given unlabeled test examples to classify all at once or in batches, rather than individually \citep*{vapnik1998statistical}. Without robustness guarantees, it is known that $\ERM$ is nearly minimax optimal in the transductive setting \citep{vapnik:74,blumer:89,DBLP:journals/corr/TolstikhinL16}. In particular, additional unlabeled test data does not offer any help from a minimax perspective. More recently, \citep{DBLP:conf/nips/GoldwasserKKM20} gave a transductive learning algorithm that takes as input labeled training examples from a distribution $\calD$ and \emph{arbitrary} unlabeled test examples (chosen by an unbounded adversary, not necessarily according to perturbation set $\calU$). For classes $\calH$ of bounded VC dimension, their algorithm guarantees low error rate on the test examples but it might \emph{abstain} from classifying some (or perhaps even all) of them. This is different from the guarantees we present in this work, where we restrict the adversary to choose from a perturbation set $\calU$ but we do \emph{not} abstain from classifying.

On the empirical side, recently \citep{DBLP:conf/icml/WuYW20} proposed a method that leverages unlabeled test data for adversarial robustness in the context of deep neural networks. However, \citep{DBLP:journals/corr/abs-2106-08387} later proposed an empirical attack that breaks their defense. Furthermore, \citep{DBLP:journals/corr/abs-2106-08387} proposed another empirical transductive defense but with no theoretical guarantees. In semi-supervised learning, recent work \citep{DBLP:conf/nips/AlayracUHFSK19,DBLP:conf/nips/CarmonRSDL19} has shown that (non-adversarial) unlabeled test data can improve adversarially robust generalization in practice, and there is also theoretical work quantifying the benefit of unlabeled data for robust generalization \citep{DBLP:conf/icml/AshtianiPU20}.

\paragraph{Open Problems} Can we design transductive learners that compete with $\OPT_\calU$ instead of \\
$\OPT_{\calU^{-1}(\calU)}$? We note that this will likely require more sophistication, in the sense that we can construct classes $\calH$ with $\vc(\calH)=1$ (similar construction to \cite{pmlr-v99-montasser19a}) and distributions $\calD$ where $\OPT_\calU=0$ but $\OPT_{\calU^{-1}(\calU)}=1$, and moreover, our simple transductive learner fails and finding a robust labeling on the test examples no longer suffices. 

At the expense of competing with $\OPT_{\calU^{-1}(\calU)}$, can we obtain stronger robust learning guarantees in the inductive setting, similar to the transductive guarantees established in this work? As we discuss in \prettyref{sec:comparisons}, we can not obtain such guarantees by directly reducing to the transductive problem, and we need improper learning because proper learning will not work.

\acks{This work was done as part of the NSF/Simons sponsored Collaboration on the Theoretical Foundations of Deep Learning (\url{https://deepfoundations.ai/}). This work was supported in part by DARPA under cooperative agreement HR00112020003.\footnote{The views expressed in this work do not necessarily reflect the position or the policy of the Government and no official endorsement should be inferred. Approved for public release; distribution is unlimited.}}

\bibliography{learning}

\begin{thebibliography}{22}
\providecommand{\natexlab}[1]{#1}
\providecommand{\url}[1]{\texttt{#1}}
\expandafter\ifx\csname urlstyle\endcsname\relax
  \providecommand{\doi}[1]{doi: #1}\else
  \providecommand{\doi}{doi: \begingroup \urlstyle{rm}\Url}\fi

\bibitem[Alayrac et~al.(2019)Alayrac, Uesato, Huang, Fawzi, Stanforth, and
  Kohli]{DBLP:conf/nips/AlayracUHFSK19}
Jean{-}Baptiste Alayrac, Jonathan Uesato, Po{-}Sen Huang, Alhussein Fawzi,
  Robert Stanforth, and Pushmeet Kohli.
\newblock Are labels required for improving adversarial robustness?
\newblock In \emph{Advances in Neural Information Processing Systems 32: Annual
  Conference on Neural Information Processing Systems 2019, NeurIPS 2019, 8-14
  December 2019, Vancouver, BC, Canada}, pages 12192--12202, 2019.
\newblock URL
  \url{http://papers.nips.cc/paper/9388-are-labels-required-for-improving-adversarial-robustness}.

\bibitem[Arora et~al.(2004)Arora, Rao, and Vazirani]{DBLP:conf/stoc/AroraRV04}
Sanjeev Arora, Satish Rao, and Umesh~V. Vazirani.
\newblock Expander flows, geometric embeddings and graph partitioning.
\newblock In L{\'{a}}szl{\'{o}} Babai, editor, \emph{Proceedings of the 36th
  Annual {ACM} Symposium on Theory of Computing, Chicago, IL, USA, June 13-16,
  2004}, pages 222--231. {ACM}, 2004.
\newblock \doi{10.1145/1007352.1007355}.
\newblock URL \url{https://doi.org/10.1145/1007352.1007355}.

\bibitem[Ashtiani et~al.(2020)Ashtiani, Pathak, and
  Urner]{DBLP:conf/icml/AshtianiPU20}
Hassan Ashtiani, Vinayak Pathak, and Ruth Urner.
\newblock Black-box certification and learning under adversarial perturbations.
\newblock In \emph{Proceedings of the 37th International Conference on Machine
  Learning, {ICML} 2020, 13-18 July 2020, Virtual Event}, volume 119 of
  \emph{Proceedings of Machine Learning Research}, pages 388--398. {PMLR},
  2020.
\newblock URL \url{http://proceedings.mlr.press/v119/ashtiani20a.html}.

\bibitem[Blumer et~al.(1989)Blumer, Ehrenfeucht, Haussler, and
  Warmuth]{blumer:89}
A.~Blumer, A.~Ehrenfeucht, D.~Haussler, and M.~Warmuth.
\newblock Learnability and the {Vapnik-Chervonenkis} dimension.
\newblock \emph{Journal of the Association for Computing Machinery},
  36\penalty0 (4):\penalty0 929--965, 1989.

\bibitem[Bubeck et~al.(2019)Bubeck, Lee, Price, and
  Razenshteyn]{bubeck2019adversarial}
Sebastien Bubeck, Yin~Tat Lee, Eric Price, and Ilya Razenshteyn.
\newblock Adversarial examples from computational constraints.
\newblock In \emph{International Conference on Machine Learning}, pages
  831--840, 2019.

\bibitem[Carmon et~al.(2019)Carmon, Raghunathan, Schmidt, Duchi, and
  Liang]{DBLP:conf/nips/CarmonRSDL19}
Yair Carmon, Aditi Raghunathan, Ludwig Schmidt, John~C. Duchi, and Percy Liang.
\newblock Unlabeled data improves adversarial robustness.
\newblock In \emph{Advances in Neural Information Processing Systems 32: Annual
  Conference on Neural Information Processing Systems 2019, NeurIPS 2019, 8-14
  December 2019, Vancouver, BC, Canada}, pages 11190--11201, 2019.
\newblock URL
  \url{http://papers.nips.cc/paper/9298-unlabeled-data-improves-adversarial-robustness}.

\bibitem[Chen et~al.(2021)Chen, Guo, Wu, Li, Lao, Liang, and
  Jha]{DBLP:journals/corr/abs-2106-08387}
Jiefeng Chen, Yang Guo, Xi~Wu, Tianqi Li, Qicheng Lao, Yingyu Liang, and Somesh
  Jha.
\newblock Towards adversarial robustness via transductive learning.
\newblock \emph{CoRR}, abs/2106.08387, 2021.
\newblock URL \url{https://arxiv.org/abs/2106.08387}.

\bibitem[Cullina et~al.(2018)Cullina, Bhagoji, and
  Mittal]{DBLP:conf/nips/CullinaBM18}
Daniel Cullina, Arjun~Nitin Bhagoji, and Prateek Mittal.
\newblock Pac-learning in the presence of adversaries.
\newblock In \emph{Advances in Neural Information Processing Systems 31: Annual
  Conference on Neural Information Processing Systems 2018, NeurIPS 2018, 3-8
  December 2018, Montr{\'{e}}al, Canada}, pages 228--239, 2018.
\newblock URL
  \url{http://papers.nips.cc/paper/7307-pac-learning-in-the-presence-of-adversaries}.

\bibitem[Goldwasser et~al.(2020)Goldwasser, Kalai, Kalai, and
  Montasser]{DBLP:conf/nips/GoldwasserKKM20}
Shafi Goldwasser, Adam~Tauman Kalai, Yael Kalai, and Omar Montasser.
\newblock Beyond perturbations: Learning guarantees with arbitrary adversarial
  test examples.
\newblock In Hugo Larochelle, Marc'Aurelio Ranzato, Raia Hadsell,
  Maria{-}Florina Balcan, and Hsuan{-}Tien Lin, editors, \emph{Advances in
  Neural Information Processing Systems 33: Annual Conference on Neural
  Information Processing Systems 2020, NeurIPS 2020, December 6-12, 2020,
  virtual}, 2020.
\newblock URL
  \url{https://proceedings.neurips.cc/paper/2020/hash/b6c8cf4c587f2ead0c08955ee6e2502b-Abstract.html}.

\bibitem[Goodfellow et~al.(2015)Goodfellow, Shlens, and
  Szegedy]{DBLP:journals/corr/GoodfellowSS14}
Ian~J. Goodfellow, Jonathon Shlens, and Christian Szegedy.
\newblock Explaining and harnessing adversarial examples.
\newblock In Yoshua Bengio and Yann LeCun, editors, \emph{3rd International
  Conference on Learning Representations, {ICLR} 2015, San Diego, CA, USA, May
  7-9, 2015, Conference Track Proceedings}, 2015.
\newblock URL \url{http://arxiv.org/abs/1412.6572}.

\bibitem[Haussler et~al.(1994)Haussler, Littlestone, and
  Warmuth]{DBLP:journals/iandc/HausslerLW94}
David Haussler, Nick Littlestone, and Manfred~K. Warmuth.
\newblock Predicting {\textbackslash}0,1{\textbackslash}-functions on randomly
  drawn points.
\newblock \emph{Inf. Comput.}, 115\penalty0 (2):\penalty0 248--292, 1994.
\newblock \doi{10.1006/inco.1994.1097}.
\newblock URL \url{https://doi.org/10.1006/inco.1994.1097}.

\bibitem[Khim and Loh(2018)]{khim2018adversarial}
Justin Khim and Po-Ling Loh.
\newblock Adversarial risk bounds for binary classification via function
  transformation.
\newblock \emph{arXiv preprint arXiv:1810.09519}, 2018.

\bibitem[Madry et~al.(2018)Madry, Makelov, Schmidt, Tsipras, and
  Vladu]{DBLP:conf/iclr/MadryMSTV18}
Aleksander Madry, Aleksandar Makelov, Ludwig Schmidt, Dimitris Tsipras, and
  Adrian Vladu.
\newblock Towards deep learning models resistant to adversarial attacks.
\newblock In \emph{6th International Conference on Learning Representations,
  {ICLR} 2018, Vancouver, BC, Canada, April 30 - May 3, 2018, Conference Track
  Proceedings}. OpenReview.net, 2018.
\newblock URL \url{https://openreview.net/forum?id=rJzIBfZAb}.

\bibitem[Montasser et~al.(2019)Montasser, Hanneke, and
  Srebro]{pmlr-v99-montasser19a}
Omar Montasser, Steve Hanneke, and Nathan Srebro.
\newblock Vc classes are adversarially robustly learnable, but only improperly.
\newblock In Alina Beygelzimer and Daniel Hsu, editors, \emph{Proceedings of
  the Thirty-Second Conference on Learning Theory}, volume~99 of
  \emph{Proceedings of Machine Learning Research}, pages 2512--2530, Phoenix,
  USA, 25--28 Jun 2019. PMLR.

\bibitem[Schmidt et~al.(2018)Schmidt, Santurkar, Tsipras, Talwar, and
  Madry]{schmidt2018adversarially}
Ludwig Schmidt, Shibani Santurkar, Dimitris Tsipras, Kunal Talwar, and
  Aleksander Madry.
\newblock Adversarially robust generalization requires more data.
\newblock In Samy Bengio, Hanna~M. Wallach, Hugo Larochelle, Kristen Grauman,
  Nicol{\`{o}} Cesa{-}Bianchi, and Roman Garnett, editors, \emph{Advances in
  Neural Information Processing Systems 31: Annual Conference on Neural
  Information Processing Systems 2018, NeurIPS 2018, December 3-8, 2018,
  Montr{\'{e}}al, Canada}, pages 5019--5031, 2018.
\newblock URL
  \url{https://proceedings.neurips.cc/paper/2018/hash/f708f064faaf32a43e4d3c784e6af9ea-Abstract.html}.

\bibitem[Shalev-Shwartz and Ben-David(2014)]{shalev2014understanding}
Shai Shalev-Shwartz and Shai Ben-David.
\newblock \emph{Understanding Machine Learning: From Theory to Algorithms}.
\newblock Cambridge university press, 2014.

\bibitem[Tolstikhin and Lopez{-}Paz(2016)]{DBLP:journals/corr/TolstikhinL16}
Ilya~O. Tolstikhin and David Lopez{-}Paz.
\newblock Minimax lower bounds for realizable transductive classification.
\newblock \emph{CoRR}, abs/1602.03027, 2016.
\newblock URL \url{http://arxiv.org/abs/1602.03027}.

\bibitem[Vapnik and Chervonenkis(1974)]{vapnik:74}
V.~Vapnik and A.~Chervonenkis.
\newblock \emph{Theory of Pattern Recognition}.
\newblock Nauka, Moscow, 1974.

\bibitem[Vapnik(1998)]{vapnik1998statistical}
Vladimir~N. Vapnik.
\newblock \emph{Statistical Learning Theory}.
\newblock Wiley-Interscience, 1998.

\bibitem[Wu et~al.(2020)Wu, Yuan, and Wu]{DBLP:conf/icml/WuYW20}
Yi{-}Hsuan Wu, Chia{-}Hung Yuan, and Shan{-}Hung Wu.
\newblock Adversarial robustness via runtime masking and cleansing.
\newblock In \emph{Proceedings of the 37th International Conference on Machine
  Learning, {ICML} 2020, 13-18 July 2020, Virtual Event}, volume 119 of
  \emph{Proceedings of Machine Learning Research}, pages 10399--10409. {PMLR},
  2020.
\newblock URL \url{http://proceedings.mlr.press/v119/wu20f.html}.

\bibitem[Yin et~al.(2019)Yin, Ramchandran, and
  Bartlett]{DBLP:conf/icml/YinRB19}
Dong Yin, Kannan Ramchandran, and Peter~L. Bartlett.
\newblock Rademacher complexity for adversarially robust generalization.
\newblock In \emph{Proceedings of the 36th International Conference on Machine
  Learning, {ICML} 2019, 9-15 June 2019, Long Beach, California, {USA}},
  volume~97 of \emph{Proceedings of Machine Learning Research}, pages
  7085--7094. {PMLR}, 2019.
\newblock URL \url{http://proceedings.mlr.press/v97/yin19b.html}.

\bibitem[Zhang et~al.(2019)Zhang, Yu, Jiao, Xing, Ghaoui, and
  Jordan]{DBLP:conf/icml/ZhangYJXGJ19}
Hongyang Zhang, Yaodong Yu, Jiantao Jiao, Eric~P. Xing, Laurent~El Ghaoui, and
  Michael~I. Jordan.
\newblock Theoretically principled trade-off between robustness and accuracy.
\newblock In \emph{Proceedings of the 36th International Conference on Machine
  Learning, {ICML} 2019, 9-15 June 2019, Long Beach, California, {USA}},
  volume~97 of \emph{Proceedings of Machine Learning Research}, pages
  7472--7482. {PMLR}, 2019.
\newblock URL \url{http://proceedings.mlr.press/v97/zhang19p.html}.

\end{thebibliography}

\end{document}